\newtheorem{theorem}{Theorem}[section]
\title{Building Deep Equivariant Capsule Networks}
\author{
  Sairaam Venkatraman\\
  \And
  S.Balasubramanian\\
  \And
  R. Raghunatha Sarma\\
  \And
  Department of Mathematics and Computer Science,\\
  Sri Sathya Sai Institute of Higher Learning,\\
  Prasanthi Nilayam, Andhra Pradesh, India
}
\begin{document}
\maketitle

\begin{abstract}
Capsule networks are constrained by the parameter-expensive nature of their layers, and the general lack of provable equivariance guarantees. We present a variation of capsule networks that aims to remedy this. We identify that learning all pair-wise part-whole relationships between capsules of successive layers is inefficient. Further, we also realise that the choice of prediction networks and the routing mechanism are both key to equivariance. Based on these, we propose an alternative framework for capsule networks that learns to projectively encode the manifold of pose-variations, termed the space-of-variation (SOV), for every capsule-type of each layer. This is done using a trainable, equivariant function defined over a grid of group-transformations. Thus, the prediction-phase of routing involves projection into the SOV of a deeper capsule using the corresponding function. As a specific instantiation of this idea, and also in order to reap the benefits of increased parameter-sharing, we use type-homogeneous group-equivariant convolutions of shallower capsules in this phase. We also introduce an equivariant routing mechanism based on degree-centrality. We show that this particular instance of our general model is equivariant, and hence preserves the compositional representation of an input under transformations. We conduct several experiments on standard object-classification datasets that showcase the increased transformation-robustness, as well as general performance, of our model to several capsule baselines.
\end{abstract}

\keywords{Capsule networks, Equivariance}

\section{Introduction}
The hierarchical component-structure of visual objects motivates their  description as instances of class-dependent spatial grammars. The production-rules of such grammars specify this structure by laying out valid type-combinations for components of an object, their inter-geometry, as well as the behaviour of these with respect to transformations on the input. A system that aims to truly understand a visual scene must accurately learn such grammars for all constituent objects - in effect, learning their aggregational structures. One means of doing so is to have the internal representation of a model serve as a component-parsing of an input across several semantic resolutions. Further, in order to mimic latent compositionalities in objects, such a representation must be reflective of detected strengths of possible spatial relationships. A natural structure for such a representation is a parse-tree whose nodes denote components, and whose weighted parent-child edges denote the strengths of detected aggregational relationships.
\paragraph{}
Capsule networks  \citep{hinton2011transforming}, \citep{sabour2017dynamic} are a family of deep neural networks that aim to build such distributed, spatially-aware representations in a multi-class setting. Each layer of a capsule network represents and detects instances of a set of components (of a visual scene) at a particular semantic resolution. It does this by using vector-valued activations, termed 'capsules'. Each capsule is meant to be interpreted as being representative of a set of generalised pose-coordinates for a visual object. Each layer consists of capsules of several types that may be instantiated at all spatial locations depending on the nature of the image. Thus, given an image, a capsule network provides a description of its components at various 'levels' of semantics. In order that this distributed representation across layers be an accurate component-parsing of a visual scene, and capture meaningful and inherent spatial relationships, deeper capsules are constructed from shallower capsules using a mechanism that combines backpropagation-based learning, and consensus-based heuristics.
\paragraph{}
Briefly, the mechanism of creating deeper capsules from a set of shallower capsules is as follows. Each deeper capsule of a particular type receives a set of predictions for its pose from a local pool of shallower capsules. This happens by using a set of trainable neural networks that the shallower capsules are given as input into. These networks can be interpreted as aiming to capture possible part-whole relationships between the corresponding deeper and shallower capsules. The predictions thus obtained are then combined in a manner that ensures that the result reflects agreement among them. This is so that capsules are activated only when their component-capsules are in the right spatial relationship to form an instance of the object-type it represents. The agreement-based aggregation described just now is termed 'routing'. Multiple routing algorithms exist, for example dynamic routing =\citep{sabour2017dynamic}, EM-routing =\citep{hinton2018matrix},
SVD-based routing =\citep{bahadori2018spectral},
and routing based on a clustering-like objective function =\citep{wang2018optimization}. These are based on differing notions of consensus, and consequently affect the capsule-decomposition of an input differently.
\paragraph{}
Based on their explicit learning of compositional structures, capsule networks can be seen as an alternative (to CNNs) for better learning of compositional representations. Indeed, CNN-based models do not have an inherent mechanism to explicitly learn or use spatial relationships in a visual scene. Further, the common use of layers that enforce local transformation-invariance, such as pooling, further limit their ability to accurately detect compositional structures by allowing for relaxations in otherwise strict spatial relations =\citep{hinton2011transforming}. Thus, despite some manner of hierarchical learning - as seen in their layers capturing simpler to
more complex  features as a function of depth - CNNs do not form the ideal representational model we seek. It is our belief that capsule-based models may serve us better in this regard.
\paragraph{}
This much said, research in capsule networks is still in its infancy, and several issues have to be overcome before capsule networks can become universally applicable like CNNs. We focus on two of these that we consider as fundamental to building better capsule network models. First, most capsule-network models, in their current form, do not scale well to deep architectures. A significant factor is the fact that all pair-wise relationships between capsules of two layers (upto a local pool) are explicitly modelled by a unique neural network. Thus, for a 'convolutional capsule' layer - the number of trainable neural networks depends on the product of the spatial extent of the windowing and the product of the number of capsule-types of each the two layers. We argue that this design is not only expensive, but also inefficient. Given two successive capsule-layers, not all pairs of capsule-types have significant relationships. This is due to them either representing object-components that are part of different classes, or being just incompatible in compositional structures. The consequences of this inefficiency go beyond poor scalability. For example, due to the large number of prediction-networks in this design, only simple functions - often just matrices - are used to model part-whole relationships. While building deep capsule networks, such a linear inductive bias can be inaccurate in layers where complex objects are represented. Thus, for the purpose of building deeper architectures, as well as more expressive layers, this inefficiency in the prediction phase must be handled.
\paragraph{}       
The second issue with capsule networks is more theoretical, but nonetheless has implications in practice. This is the lack, in general, of theoretical guarantees on equivariance. Most capsule networks only use intuitive heuristics to learn transformation-robust spatial relations among components. This is acceptable, but not ideal. A capsule network model that can detect compositionalities in a provably-invariant manner are more useful, and more in line with the basic motivations for capsules.
\paragraph{}
Both of the above issues are remedied in the following description of our model. First, instead of learning pair-wise relationships among capsules, we learn to projectively encode a description of each capsule-type for every layer. This we do by associating each capsule-type with a vector-valued function, given by a trainable neural network. This network assumes the role of the prediction mechanism in capsule networks. We interpret the role of this network as a means of encoding the manifold of legal pose-variations for its associated capsule-type. It is expected that, given proper training, shallower capsules that have no relationship with a particular capsule-type will project themselves to a vector of low activation (for example, two-norm), when input to the corresponding network. As an aside, it is this mechanism that gives the name to our model. We term this manifold the 'space-of-variation' of a capsule-type. Since, we attempt to learn such spaces at each layer, we name our model 'space-of-variation' networks (SOVNET). In this design, the number of trainable networks for a given layer depend on the number of capsule-types of that layer. 
\paragraph{}
As mentioned earlier, the choice of prediction networks and routing algorithm is important to having guarantees on learning transformation-invariant compositional relationships. Thus, in order to ensure equivariance, which we show is sufficient for the above, we use group-equivariant convolutions (GCNN) =\citep{cohen2016group} in the prediction phase. Thus, shallower capsules of a fixed type are input to a GCNN associated with a deeper capsule-type to obtain predictions for it. Apart from ensuring equivariance to transformations, GCNNs also allow for greater parameter-sharing (across a set of transformations), resulting in greater awareness of local object-structures. We argue that this could potentially improve the quality of predictions when compared to isolated predictions made by convolutional capsule layers, such as those of =\citep{hinton2018matrix}.
\paragraph{}
The last contribution of this paper is an equivariant degree-centrality based routing algorithm. The main idea of this method is to treat each prediction for a capsule as a vertex of a graph, whose weighted edges are given by a similarity measure on the predictions themselves. Our method uses the softmaxed values of the degree scores of the affinity matrix of this graph as a set of weights for aggregating predictions. The key idea being that predictions that agree with a majority of other predictions for the same capsule get a larger weight - following the principle of routing-by-agreement. While this method is only heuristic in the sense of optimality, it is provably equivariant and preserves the capsule-decomposition of an input. We summarise the contributions of this paper in the following:
\begin{itemize}
	\item[1.] A general framework for a scalable capsule-network model.
	\item[2.] A particular instantiation of this model that uses equivariant convolutions, and an equivariant, degree-centrality-based routing algorithm.
	\item[3.] A graph-based framework for studying the representation of a capsule network, and the proof of the sufficiency of equivariance for the (qualified) preservation of this representation under transformations of the input.
	\item[4.] A set of proof-of-concept, evaluative experiments on affinely transformed variations of MNIST, FASHIONMNIST, and CIFAR10, as well as separate experiments on KMNIST and SVHN that showcase the superior adapatability of SOVNET architectures to train and test-time geometric perturbations of the data, as well as their general performance.
\end{itemize}

\section{Sovnet, equivariance, and compositionality}
We begin with essential definitions for a layer of SOVNET, and the properties we wish to guarantee. Given a group $(G,\circ)$, we formally describe the $l^{th}$ layer of a SOVNET architecture as the set of function-tuples $\{(f_{i}^{l},a_{i}^{l}):0\leq i\leq N_{l}-1; f_{i}^{l}:G \rightarrow \mathbb{R}^{d^{l}}; a_{i}^{l}:G \rightarrow [0,1] \}$. Here, $N_{l}$ denotes the number of capsule-types at the $l^{th}$ layer, $f_{i}^{l}$ is a functional description of the $d^{l}$-dimensional pose-vectors of instances of the $i^{th}$ capsule-type, and $a_{i}^{l}$ is a functional description of the corresponding activations. 
\paragraph{}
We model each capsule-type as a function over a group of transformations so as to allow for formal guarantees on transformation-equivariance. Thus, we also model images as function from a group to a representation-space. The main assumption being that the translation-group is a subgroup of the group in question. This is similar in approach to =\citep{cohen2016group}. We wish for each capsule-type, both pose and activation-wise, to display equivariance. We present a formal definition of this notion.
\paragraph{}
Consider a group $(G,\circ)$ and vector spaces $V$, $W$. Let $T$ and $T'$ be two group-representations for elements of $G$ over $V$ and $W$, respectively. $\Phi$: $V$ $\rightarrow W$ is said to be equivariant with respect to $T$ and $T'$ if $\forall g \in G$, $\forall x \in V$, $\Phi(T_{g}x)$ = $T'_{g}\Phi(x)$.
\paragraph{}
This definition translates to a preservation on transformations in the input-space to the output-space - something that allows no loss of information in compositional structures. As in =\citep{cohen2016group}, we restrict the notion of equivariance in our model by using the operator $L_{g}$ in place of the group-representation. $L_{g}$ is given by $[L_{g}f](x)$ = $f(g^{-1}x)$. Thus, if $\otimes$ denotes an operation between two functions, we require ([$L_{g}f$] $\otimes \Psi)(x)$ = $[L_{g}(f \otimes \Psi)](x)$. The operator $\otimes$ describes the change in representation space, and is dependent on the nature of the deep learning model. In our case, $\otimes$ is given by routing among capsules.
\subsection{sovnet layer}
We define the capsule-types of a particular layer as an output of an agreement-based aggregation of predictions made by the preceding layer. A recursive application of this definition is enough to define a SOVNET architecture, given an initial set of capsules. A means of obtaining this initial set is given in section \ref{experiments}. We provide a general framework for the summation-based family of routing procedures in Algorithm \ref{generalrouting}.

\begin{algorithm}
	\caption{A general routing algorithm for SOVNET}\label{generalrouting}
	\textbf{Input}:  $\{(f_{i}^{l},a_{i}^{l})|i\in\{0,...,N_{l}-1\}, f_{i}^{l}:G\rightarrow \mathbb{R}^{d^{l}}, a_{i}^{l}:G\rightarrow [0,1]\}$\\
	\textbf{Output}: $\{(f_{j}^{l+1},a_{j}^{l+1})|j\in\{0,...,N_{l+1}-1\}, f_{j}^{l+1}:G\rightarrow \mathbb{R}^{d^{l+1}}, a_{j}^{l+1}:G\rightarrow [0,1]\}$\\
	\textbf{Trainable Functions}:
	$(\Psi_{j}^{l+1},\cdot)$ - projection networks that use operator $\cdot$\\
	\hspace*{\algorithmicindent} $S_{ij}^{l+1}(g)$ = $((f_{i}^{l},a_{i}^{l}) \cdot \Psi_{j}^{l+1})(g)$ $\forall$ $i,j$, $\forall g \in G$\newline
	\hspace*{\algorithmicindent} $(c_{0j}^{l+1}(g),...,c_{N_{l}-1j}^{l+1}(g))$ = $GetWeights(S_{0j}^{l+1}(g),...,S_{N_{l}-1j}^{l+1}(g))$ $\forall$ $j$, $\forall g \in G$\newline
	\hspace*{\algorithmicindent} $f_{j}^{l+1}(g)$ = $\sum_{i=1}^{N_{l}-1} c_{i,j}^{l+1}(g)S_{ij}^{l+1}(g)$ $\forall$ $j$, $\forall g \in G$\newline
	\hspace*{\algorithmicindent} $a_{j}^{l+1}(g)$ = $Agreement(f_{j}^{l+1}(g),S_{0j}^{l+1}(g),...,S_{N_{I}-1j}^{l+1}(g))$ 	$\forall$ $j$	
\end{algorithm}

\paragraph{}
The steps in Algorithm \ref{generalrouting} are exactly the same as in any other capsule network: shallower capsules make a prediction for deeper capsules, the importance of these predictions are scored, and finally these are used to build the deeper capsules. The extent of agreement among the predictions for a particular capsule of a type defines its activation. We must note that several other families of routing algorithms exist, especially those that do not use weighted-summation. An example of such an algorithm is spectral-routing =\citep{bahadori2018spectral}. We do not, however, pursue a description of such methods as our aims are more limited in scope.
\paragraph{}
We instantiate the above algorithm to a specific model, as given in Algorithm \ref{degreerouting}. In this model, the $\Psi_{j}^{l}$ are group-equivariant convolutional filters, and the operator $\cdot$ is the corresponding group-equivariant correlation operator $\star$.
The weights $c_{ij}^{l+1}(g)$ are, in this routing method, the softmaxed degree-scores of the affinities among predictions for the same deeper capsule.
Further, like in dynamic routing =\citep{sabour2017dynamic}, we also assume that the activation of a capsule is given by its two-norm. To ensure that this value is in $[0,1]$, we use the 'squash' function of dynamic routing. Thus, we do not mention it explicitly. Note that we have used the subscript notation to also denote that a variable is part of a vector, for example $S_{ijp}^{l+1}(g)$ denotes the $p^{th}$ element of the $d^{l+1}$-dimensional vector $S_{ij}^{l+1}(g)$. This new routing algorithm is meant to serve as an alternative to existing iterative routing strategies such as dynamic routing. An important strength of our method being that there is no hyperparameter, like that of the number of iterations in dynamic routing or EM routing. 

\begin{algorithm}
	\caption{The degree-centrality based routing algorithm for SOVNET}\label{degreerouting}
	
	\textbf{Input}:  $\{f_{i}^{l}|i\in\{0,...,N_{l}-1\}, f_{i}^{l}:G\rightarrow \mathbb{R}^{d^{l}}\}$\\
	\textbf{Output}: $\{f_{j}^{l+1}|j\in\{0,...,N_{l+1}-1\}, f_{j}^{l+1}:G\rightarrow \mathbb{R}^{d^{l+1}}\}$\\
	\textbf{Trainable Functions}:
	$(\Psi_{j}^{l+1},\star)$, $0 \leq j \leq N_{l+1} -1$, - a set of $d^{l+1}$ group-equivariant convolutional filters (per capsule-type) that use the group-equivariant correlation operator $\star$\\
	\hspace*{\algorithmicindent} $S_{ijp}^{l+1}(g)$ = $(f_{i}^{l} \star \Psi_{j}^{l+1,p})(g)$ = $\sum_{h\in G}\sum_{k=0}^{d^{l}-1}f_{ik}^{l}(h)\Psi_{k}^{l+1,p}(g^{-1} \circ h)$; $p \in \{0,...,d^{l+1}-1\}$\newline
	\hspace*{\algorithmicindent} $(c_{0j}^{l+1}(g),...,c_{N_{l}-1j}^{l+1}(g))$ = $DegreeScore(S_{0j}^{l+1}(g),...,S_{N_{l}-1j}^{l+1}(g)); \forall 0 \leq j \leq N_{l+1} - 1 $, $\forall$ $g$\newline
	\hspace*{\algorithmicindent} $f_{j}^{l+1}(g)$ = $\sum_{i=0}^{N_{l}-1} c_{ij}^{l+1}(g)S_{ij}^{l+1}(g)$ $\forall$ $ 0  \leq j \leq N_{l+1} - 1$, $\forall g \in G$\newline
	\hspace*{\algorithmicindent} $f_{j}^{l+1}(g)$ = $Squash(f_{j}^{l+1}(g))$ =  $\frac{\Vert f_{j}^{l+1}(g) \Vert}{1 +\Vert f_{j}^{l+1}(g) \Vert^{2}}$ $f_{j}^{l+1}(g)$; $\forall$ $0 \leq j \leq N_{l}-1$, $\forall$ $g$ $\in$ $G$
 \begin{algorithmic}
 \Procedure{DegreeScore}{$S_{0j}^{l+1}(g),...,S_{N_{l}-1j}^{l+1}(g)$}\\
 $A_{ik}^{j}(g)$ = $\frac{S_{ij}^{l+1}(g).S_{kj}^{l+1}(g)}{\Vert S_{ij}^{l+1}(g) \Vert.\Vert S_{kj}^{l+1}(g) \Vert}$; $0 \leq i,k \leq N_{l}-1$\\
 $Degree_{i}^{j}(g)$ = $\sum_{k=0}^{N_{l}-1}(A_{ik}^{j}(g))$; $0 \leq i \leq N_{l}-1$\\
 $c_{ij}(g)$ = $\frac{\exp(Degree_{i}^{j}(g))}{\sum_{k=0}^{N_{l}-1}\exp(Degree_{k}^{j}(g))}$; $0 \leq i \leq N_{l}-1$\\
 \textbf{return} $c_{ij}(g)$ $\forall 0 \leq i \leq N_{l}-1$
 \EndProcedure 
 \end{algorithmic}	
\end{algorithm} 

\subsection{Equivariance, compositionality and sovnet }
The SOVNET layer we introduced in Algorithm \ref{degreerouting} is group-equivariant with respect to the group action $L_{g}$, where $g \in G$ - the set of transformations over which the group-convolution is defined. For notational convenience, we define $\otimes$ to be an operator that encapsulates the degree-routing procedure for with prediction networks $\Psi_{j}^{l+1}$. Thus, the $j^{th}$ capsule-type of the $l+1^{th}$ layer is functionally depicted as $f_{j}^{l+1}$ = $(F^{l} \otimes \Psi_{j}^{l+1})$, where $F^{l}$ = $(f_{0}^{l}, ..., f_{N_{l}-1}^{l})$. The formal statement of this result is given below; the proof is presented in the appendix.

\begin{theorem}
  The SOVNET layer defined in Algorithm \ref{degreerouting}, and denoted by the operator $\otimes$ as given above, satisfies $([L_{g}F^{l}] \otimes \Psi_{j}^{l+1})$ = $(L_{g}[F^{l} \otimes \Psi_{j}^{l+1}])$, where $g$ belongs to the underlying group of the equivariant convolution.	  
\end{theorem}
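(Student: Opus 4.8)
The plan is to unfold the operator $\otimes$ into the four steps of Algorithm \ref{degreerouting} — prediction by group-equivariant correlation, computation of the degree-centrality weights via \textsc{DegreeScore}, the weighted summation, and the squash nonlinearity — and to check that each step is an intertwiner for the action $L_{g}$, so that the composition is too. The whole argument is then a bookkeeping exercise tracking how a shift of the input by $L_{g}$ propagates through the pipeline.

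First I would handle the prediction step, which is the only step carrying genuine content. This is exactly the equivariance of the group-equivariant correlation operator $\star$ from \citep{cohen2016group}: for each filter $\Psi_{j}^{l+1,p}$ and each input $f_{i}^{l}$, one has $(L_{g}f_{i}^{l})\star\Psi_{j}^{l+1,p} = L_{g}(f_{i}^{l}\star\Psi_{j}^{l+1,p})$, which follows by the substitution $h\mapsto g\circ h$ in the sum $\sum_{h\in G}\sum_{k}f_{ik}^{l}(h)\Psi_{k}^{l+1,p}(g'^{-1}\circ h)$ together with the left-invariance of summation over $G$. Writing $\widetilde{S}$ for the predictions obtained from the shifted input $L_{g}F^{l}$, this yields $\widetilde{S}_{ijp}^{l+1}(g') = S_{ijp}^{l+1}(g^{-1}\circ g')$ for all $i,j,p$ and all $g'\in G$. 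Next I would observe that the remaining three steps act \emph{pointwise over $G$}: for a fixed $g'$, the affinity entries $A_{ik}^{j}(g')$, the degree scores $Degree_{i}^{j}(g')$, the softmax weights $c_{ij}^{l+1}(g')$, the combination $\sum_{i}c_{ij}^{l+1}(g')S_{ij}^{l+1}(g')$, and the squash all depend only on the fiber $\{S_{ij}^{l+1}(g')\}_{i}$ at that single group element and never couple values at distinct points of $G$; hence each such map $\Theta$ satisfies $\Theta(L_{g}\,\cdot) = L_{g}\,\Theta(\cdot)$ trivially. Composing these facts, the weights transform as $\widetilde{c}_{ij}^{l+1}(g') = c_{ij}^{l+1}(g^{-1}\circ g')$, so the pre-squash pose satisfies $\sum_{i}\widetilde{c}_{ij}^{l+1}(g')\widetilde{S}_{ij}^{l+1}(g') = \sum_{i}c_{ij}^{l+1}(g^{-1}\circ g')S_{ij}^{l+1}(g^{-1}\circ g') = [L_{g}f_{j}^{l+1}](g')$, and the pointwise squash preserves this identity; this is precisely $([L_{g}F^{l}]\otimes\Psi_{j}^{l+1}) = L_{g}[F^{l}\otimes\Psi_{j}^{l+1}]$.

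The main obstacle is not depth but care: the load-bearing step is the classical GCNN equivariance lemma, and the rest hinges on the claim that \textsc{DegreeScore}, the weighted sum, and squash introduce no coupling across group elements — so I would state this pointwise property explicitly and verify it line by line against Algorithm \ref{degreerouting}. I would also note the mild well-definedness caveats (the cosine affinity and the squash are evaluated on possibly zero-norm prediction vectors), handled exactly as in the baseline routing schemes, so that the phrase ``evaluated at $g^{-1}\circ g'$'' is literally what occurs at every step. An identical argument applied to the two-norm activations shows the activation maps are equivariant as well, so the full layer — pose and activation — is $L_{g}$-equivariant.
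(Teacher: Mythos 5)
Your proposal is correct and follows essentially the same route as the paper's proof: decompose the layer into its steps, invoke the Cohen--Welling equivariance of the group correlation $\star$ for the prediction step, and argue that \textsc{DegreeScore}, the weighted sum, and the squash preserve equivariance. Your single observation that everything after the prediction step acts fiberwise over $G$ (hence commutes with $L_{g}$ trivially) is a cleaner packaging of what the paper verifies operation by operation --- dot products, norms, division, softmax, products, and sums --- but it is the same argument in substance.
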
 

\begin{proof}
	The proof is given in the appendix.
\end{proof}

\paragraph{}
Equivariance is widely considered a desirable inductive bias for a variety of reasons. First, equivariance mirrors natural label-invariance under transformations. Second, it lends predictability to the output of a network under (fixed) transformations of the input. These, of course, lead to a greater robustness in handling transformations of the data. We aim at adding to this list by showing that equivariance guarantees the preservation of detected compositionalities in a SOVNET architecture. This is of course quite unsurprising, and has been a significant undercurrent of the capsule-network idea. Our work completes this intuition with a formal result.
\paragraph{}
We begin by first defining the notion of a capsule-decomposition graph. This graph is formed from the activations and the routing weights of a SOVNET. Specifically, given an input to a SOVNET model, each capsule of every type is a vertex in this graph. We construct an edge between capsules that are connected by routing, with the direction from the shallower capsule to the deeper capsule. Each of these edges are weighted by the corresponding routing coefficient. Capsules not related to each other by routing are not connected by an edge. This graph is a direct formalisation of the various detected compositionalities with their strengths.
\paragraph{}
What should the ideal behaviour of this graph be under the change-of-viewpoint of an input? The answer to this lies in the expected behaviour of natural compositionalities. Thus, while the pose of objects, and their components, is changed under transformations of the input, the relative geometry is constant. Thus, it is desirable that the capsule-decomposition graphs of a particular input (and its transformed variations) be isomorphic to each other. We show that a SOVNET model that is equivariant with respect to a set of transformations satisfies the above property for that set. A more formal description of the capsule-decomposition graph, and the statement for the above theorem are given below.
\paragraph{}
Consider a $L$-layer SOVNET model, whose routing procedure belongs to the family of methods given by Algorithm \ref{generalrouting}. Let us consider a fixed input $x$ : $G \rightarrow \mathbb{R}^{c}$. We define the capsule-decomposition graph of such a model, for this input $x$, as  $\mathbb{G}(x)$ = $(V(x),E(x))$. Here, $V(x)$ and $E(x)$ denote the vertex-set and the edge-set, respectively. $V(x)$ $= \{\tilde{f}_{i}^{l}(g): 0 \leq i \leq  N_{l} -1, 0 \leq l \leq L-1, g \in G\}$, where $\tilde{f}_{i}^{l}(g)$ = $(g,i,f_{i}^{l}(g),a_{i}^{l}(g))$, $g$ $\in$ $G$, $0 \leq i \ N_{l}-1$. $E(x)$ $= \{(\tilde{f}_{i}^{l}(g_{1}), \tilde{f}_{i}^{l+1}(g_{2}), c_{ij}^{l+1}(g_{2})): g_{1} \in Pool(g_{2}) \}$. $Pool(g_{2})$ denotes the pool of grid-positions that route to $g_{2}$. This definition uses the positional, as well as type information of each capsule to uniquely identify it. Moreover, we also use the notation $L_{h}\tilde{f}_{i}^{l}(g)$ to denote $(h^{-1} \circ g,i,f_{i}^{l}(h^{-1} \circ g),a_{i}^{l}(h^{-1} \circ g))$.

\begin{theorem}
	Consider an $L$-layer SOVNET whose activations are routed according to a procedure belonging to the family given by Algorithm \ref{generalrouting}. Further, assume that this routing procedure is equivariant with respect to the group $G$. Then, given an input $x$ and $\forall g \in G$, $\mathbb{G}(x)$ and $\mathbb{G}([L_{g}x])$ are isomorphic.
\end{theorem}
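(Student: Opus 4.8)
The plan is to exhibit an explicit isomorphism $\phi_{g}:\mathbb{G}(x)\to\mathbb{G}([L_{g}x])$ induced by the left action $h\mapsto g\circ h$ on grid-positions, acting as the identity on the layer index and on the capsule-type index. Concretely, I would send the vertex of $\mathbb{G}(x)$ sitting at layer $l$, type $i$, position $h$ to the vertex of $\mathbb{G}([L_{g}x])$ at layer $l$, type $i$, position $g\circ h$, and leave all edge-weights untouched. Everything then reduces to two claims: (i) $\phi_{g}$ is a well-defined bijection of the vertex sets whose corresponding vertices carry the same pose/activation data up to the $g$-shift of position; and (ii) $\phi_{g}$ carries every weighted directed edge of $\mathbb{G}(x)$ to a weighted directed edge of $\mathbb{G}([L_{g}x])$ with identical weight, and conversely.

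\textbf{Step 1: propagate equivariance through depth.} I would first show, by induction on $l$, that if $f_{i}^{l}{}'$, $a_{i}^{l}{}'$, $c_{ij}^{l}{}'$ denote the quantities produced by running the architecture on $[L_{g}x]$, then $f_{i}^{l}{}' = L_{g}f_{i}^{l}$, $a_{i}^{l}{}' = L_{g}a_{i}^{l}$ and $c_{ij}^{l}{}' = L_{g}c_{ij}^{l}$ at every layer. The base case is the equivariance of the primary capsule layer with respect to $L_{g}$, which is built into its construction (cf.\ Section~\ref{experiments}). For the inductive step, equivariance of the group-correlation step (more generally, the standing hypothesis that the routing procedure is $G$-equivariant) gives $S_{ij}^{l+1}{}' = L_{g}S_{ij}^{l+1}$; since $GetWeights$ in Algorithm~\ref{generalrouting} is evaluated pointwise in the group argument, feeding it the translated predictions yields $c_{ij}^{l+1}{}' = L_{g}c_{ij}^{l+1}$; and the weighted sum together with the activation/squash step, both pointwise in $g$, then give $f_{j}^{l+1}{}' = L_{g}f_{j}^{l+1}$ and $a_{j}^{l+1}{}' = L_{g}a_{j}^{l+1}$. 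In particular the data attached to the vertex of $\mathbb{G}([L_{g}x])$ at $(l,i,g\circ h)$ is $(f_{i}^{l}{}'(g\circ h),a_{i}^{l}{}'(g\circ h)) = (f_{i}^{l}(h),a_{i}^{l}(h))$.

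\textbf{Step 2: bijection and edge preservation.} Because $h\mapsto g\circ h$ is a bijection of $G$ and $\phi_{g}$ fixes layer and type, $\phi_{g}$ is a bijection $V(x)\to V([L_{g}x])$; by Step~1 it genuinely lands in $V([L_{g}x])$ and matches vertex data up to the position shift. For edges, the ordered pair of vertices at $(l,i,h_{1})$ and $(l+1,j,h_{2})$ spans an edge of $\mathbb{G}(x)$, with weight $c_{ij}^{l+1}(h_{2})$, exactly when $h_{1}\in Pool(h_{2})$. The remaining ingredient is covariance of the receptive-field assignment, $Pool(g\circ h_{2}) = g\circ Pool(h_{2})$, so that $h_{1}\in Pool(h_{2})\iff g\circ h_{1}\in Pool(g\circ h_{2})$; combined with $c_{ij}^{l+1}{}'(g\circ h_{2}) = c_{ij}^{l+1}(h_{2})$ from Step~1, this shows $\phi_{g}$ maps edges to edges of the same weight, and replaying the argument with $g^{-1}$ gives the converse. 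Hence $\phi_{g}$ is an isomorphism of weighted directed graphs; moreover it intertwines the vertex data through $L_{g}$, which is the precise sense in which the detected compositional structure is preserved.

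\textbf{Expected main obstacle.} I expect the only genuinely delicate point to be the covariance of the pooling/windowing map, $Pool(g\circ h) = g\circ Pool(h)$. When the correlation is taken over the whole group this is immediate from the defining sum $S_{ij}^{l+1}(g)=\sum_{h\in G}\sum_{k}f_{ik}^{l}(h)\,\Psi_{k}^{l+1,p}(g^{-1}\circ h)$, but any implementation with genuinely local windows needs those windows to be defined homogeneously (a single neighbourhood transported by the group action), and it is exactly here that the assumption that the translation group embeds in $G$ and that capsules live on a homogeneous grid is used. A secondary, bookkeeping-level subtlety is to treat vertices as indexed by the triple $(l,i,h)$ rather than by the raw tuple $\tilde{f}_{i}^{l}(h)$, so that $\phi_{g}$ is unambiguously a bijection even in the degenerate case where two distinct capsules happen to carry numerically identical pose and activation values.
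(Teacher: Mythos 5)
Your proposal is correct and takes essentially the same route as the paper's proof: an explicit position-shift bijection on vertices (the paper uses $g_{1}\mapsto h^{-1}\circ g_{1}$, you use left-multiplication by $g$ --- the same map up to replacing the transformation by its inverse), with the model's layer-wise equivariance transferring the routing coefficients so that weighted edges correspond, and the converse obtained by running the argument with the inverse transformation. Your Step 1 induction through depth and your explicit covariance requirement $Pool(g\circ h)=g\circ Pool(h)$ simply spell out what the paper's brief appeal to ``the assumed equivariance of the model'' leaves implicit.
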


\begin{proof}
	The proof is given in the appendix.
\end{proof}

\paragraph{}
Based on above theorem, and the fact that degree-centrality based routing is equivariant, the above result applies to SOVNET models that use Algorithm \ref{degreerouting} .  

\section{Experiments and results}\label{experiments}
This section presents a description of the experiments we performed. We conducted two sets of experiments; the first to compare SOVNET architectures to other capsule network baselines with respect to transformation robustness on classification, and the second to compare SOVNET to certain capsule as well as convolutional baselines based on classification performance. Before we present the details of these experiments, we briefly describe some details of the SOVNET architecture we used. We only present an outline - the code will be released pending publication.
\paragraph{}
The first detail of the architecture pertains to the construction of the first layer of capsules. While many approaches are possible, we used the following methodology that is similar in spirit to other capsule network models. The first layer of the SOVNET architectures we constructed use a modified residual block that uses the SELU activation, along with group-equivariant convolutions. This is so as to allow a meaningful set of equivariant feature maps to be used for the creation of the first set of capsules. Intuition and some literature, for example \citep{rosario2019multi}, suggest that the construction of primary capsules plays a significant role in the performance of the capsule network. Thus, it is necessary to build a sufficiently expressive layer that yields the first set of meaningful capsule-activations. To this end, each capsule-type in the primary capsule layer is associated with a group-convolution layer followed by a modified residual block. The convolutional feature-maps from the preceding layer passes through each of these sub-networks to yield the primary capsules. No routing is performed in this layer.
\paragraph{}
We now describe the SOVNET blocks. Since the design of SOVNET significantly reduces the number of prediction networks, and thereby the number of trainable parameters, we are able to build architectures whose each layer uses more expressive prediction mechanisms than a simple matrix. Specifically, each hidden layer of the SOVNET architectures we consider uses a (group-equivariant) modified residual block as the prediction mechanism. We use a SOVNET architecture that uses 5 hidden layers for MNIST, FashionMNIST, KMNIST, and SVHN, and a model that uses 6 hidden layers for CIFAR-10. Unlike DeepCaps - another capsule network whose predictions use (regular) convolution, each of the hidden layers of out SOVNET models use degree-routing. The hidden layers of DeepCaps (excepting the last), in contrast, are not strictly capsule-based - being just convolutions whose outputs are reshaped to a capsule-form. 
\paragraph{}
The output capsule-layer of SOVNET is designed similar to the hidden capsule-layers, with the difference that the prediction-mechanism is a group-convolutional implementation of a fully-connected layer. In order to make a prediction for the class of an input, the maximum across the rotational (and reflectional) positions of the two-norm of the capsule-activations of this layer are taken for each class-type. This is an equivariant operation, as it corresponds to the subgroup-pooling of \citep{cohen2016group}. The predictions that this layer yields is the type of the capsule with the maximum two-norm.
\paragraph{}
In order to guarantee the robustness to translations and rotations, we used the p4-convolutions =\citep{cohen2016group} for the prediction mechanism in all the networks used in the first set of experiments. For the second set, we used the p4m-convolution =\citep{cohen2016group}, that is equivariant to rotations, translations and reflections - for greater ability to learn from augmentations. The architectures, however are identical but for this difference. 
\paragraph{}
As in =\citep{sabour2017dynamic}, we used a margin loss and a regularising reconstruction loss to train the networks. The positive and negative margins for half of the training epochs were set to 0.9 and 0.1, respectively. Further, the negative margin-loss was weighted by 0.5, as in =\citep{sabour2017dynamic}. These values were used for the first half of the training epochs. In order to facilitate better predictions, these values were changed to 0.95, 0.05, and 0.8, respectively for the second half of the training. We adopt this from =\citep{rajasegaran2019deepcaps}. The reconstruction loss was computed by masking the incorrect classes, and by feeding the 'true' class-capsule to a series of transposed convolutions to reconstruct the image. The mean square loss was computed for the reconstruction and original image. The main idea being that this loss guides the capsule network to build meaningful capsules. This loss was weighed by 0.0005 as in =\citep{sabour2017dynamic}. We used the Adam optimiser and an exponential learning rate scheduler that reduced the learning rate by a factor of 0.9 each epoch.
\paragraph{}
With this outline of the architecture and details of the training, we now describe the first set of experiments we conducted on SOVNET. The preservation of detected compositionalities under transformations in SOVNET leads us to the expectation that SOVNET models, when properly trained, will display greater robustness to changes in viewpoint of the input. Apart from handling test-time transformations, as is the commonly held notion of transformation robustness, 
a robust model must also effectively learn from train-time perturbations of the data. Based on these ideas, we designed a set of experiments that compare SOVNET architectures to other capsule networks on their ability to handle train and test-time affine transformations of the data. 
\paragraph{}
Specifically, we perform experiments on MNIST =\citep{lecun-mnisthandwrittendigit-2010}, FashionMNIST =\citep{xiao2017fashion}, and CIFAR-10 =\citep{krizhevsky2009learning}. For each of these datasets, we created 5 variations of the train and test-splits by randomly transforming data according to the extents of the transformations given in Table \ref{transformations}. We train a given model on each transformed version of the training-split, and test each model on each of the versions of the test-split. Thus we obtain, for a single model, 25 accuracies per dataset - each corresponding to a pair of train and test-splits. There is a single modification to these transformations for the case of CIFAR-10. In order to compare SOVNET against the closest competitor DeepCaps, we use their 
strategy of first resizing CIFAR-10 images to 64$\times$64, followed by translations and rotations.
\paragraph{}
We tested SOVNET against four capsule network baselines, namely Capsnet =\citep{sabour2017dynamic}, EMcaps =\citep{hinton2018matrix}, DeepCaps \citep{rajasegaran2019deepcaps}, and GCaps =\citep{lenssen2018group}. The results of these experiments are given in Tables \ref{Affine1} to \ref{Affine15}. In the majority of the cases, SOVNET obtains the highest accuracy - showing that it is more robust to transformations of the data. Note that we had to conduct these experiments as such a robustness study was not done in the original papers for the baselines. We used, and modified, code from the following github sources for the implementation of the baselines: =\citep{li_empirical_2019} for CAPSNET; =\citep{yang_pytorch_2019} for EMCAPS; =\citep{rajasegaran_official_2019} and =\citep{hopefulrational_pytorch_2019} for DeepCaps, and =\citep{lenssen_pytorch_2019} for GCaps.
  
\paragraph{}
The second set of experiments we conducted, tested SOVNET against several capsule as well as convolutional baselines. We trained and tested SOVNET on KMNIST =\citep{clanuwat2018deep} and SVHN =\citep{netzer2011reading}. With fairly standard augmentation - mild translations (and resizing for SVHN to 64$\times$64) - the SOVNET architecture with p4m-convolutions was able to achieve on-par, or above, comparative performance. The results of this experiment are in Table \ref{augmented}. In order to compare the performance of SOVNET architectures against more sophisticated CNN-baselines, we also trained ResNet-18, ResNet-34 on the most extreme of the transformations of the list - translation by 2 pixels, and rotation by 180\textdegree. The results of these experiments are presented in the appendix.  

\begin{table}[h]
	\caption{List of the extents for the affine transformations}
	\begin{center}
		\begin{tabular}{ |p{3cm}||p{3cm}||p{3cm}|  }
			\hline
			S.no. & Translational extent & Rotational extent \\
			\hline
			1 & 0 pixels&0\textdegree \\
			\hline
			2 & 2 pixels&30\textdegree \\
			\hline
			3 & 2 pixels&60\textdegree \\
			\hline
			4 & 2 pixels&90\textdegree \\
			\hline
			5 & 2 pixels&180\textdegree \\
			\hline
		\end{tabular}\label{transformations}
	\end{center}
\end{table}

\begin{table}[h]
	\caption{Experiments on MNIST} 
	\resizebox{0.5\linewidth}{!}{
		\begin{tabular}{|p{1.5cm}||p{1cm}||p{1cm}||p{1cm}||p{1cm}||p{1cm}|}
			\hline
			\multicolumn{6}{|c|}{Results on Training on Untransformed MNIST}\\
			\hline
			Method&(0,0\textdegree)&(2,30\textdegree)&(2,60\textdegree)&(2,90\textdegree)&(2,180\textdegree)\\
			\hline
			Capsnet&99.35\%&91.57\%&72.10\%&55.27\%&42.58\%\\
			EMcaps&99.09\%&92.23\%&72.83\%&56.66\%&42.95\%\\
			G-Caps&97.83\%&82.59\%&66.27\%&56.63\%&\textbf{54.52\%}\\
			DeepCaps&99.56\%&94.61\%&74.44\%&57.24\%&45.43\%\\
			SOVNET&\textbf{99.68\%}&\textbf{96.15\%}&\textbf{80.53\%}&\textbf{64.55\%}&\textbf{51.02\%}\\
			\hline
		\end{tabular}\label{Affine1}}
	\resizebox{0.5\linewidth}{!}{
		\begin{tabular}{|p{1.5cm}||p{1cm}||p{1cm}||p{1cm}||p{1cm}||p{1cm}|}
			\hline
			\multicolumn{6}{|c|}{Results on Training on MNIST Transformed by (2,30\textdegree)}\\
			\hline
			Method&(0,0\textdegree)&(2,30\textdegree)&(2,60\textdegree)&(2,90\textdegree)&(2,180\textdegree)\\
			\hline
			Capsnet&99.60\%&99.39\%&95.65\%&79.53\%&59.58\%\\
			EMcaps&99.36\%&99.03\%&94.91\%&79.12\%&59.03\%\\
			G-Caps&98.12\%&96.17\%&90.87\%&81.34\%&\textbf{77.13\%}\\
			DeepCaps&99.62\%&99.57\%&97.50\%&84.16\%&62.75\%\\
			SOVNET&\textbf{99.77\%}&\textbf{99.70\%}&\textbf{98.86\%}&\textbf{90.63\%}&\textbf{69.26\%}\\
			\hline
	\end{tabular}}
	\resizebox{0.5\linewidth}{!}{
		\begin{tabular}{|p{1.5cm}||p{1cm}||p{1cm}||p{1cm}||p{1cm}||p{1cm}|}
			\hline
			\multicolumn{6}{|c|}{Results on Training on MNIST Transformed by (2,60\textdegree)}\\
			\hline
			Method&(0,0\textdegree)&(2,30\textdegree)&(2,60\textdegree)&(2,90\textdegree)&(2,180\textdegree)\\
			\hline
			Capsnet&99.39\%&99.12\%&98.99\%&95.53\%&72.06\%\\
			EMcaps&98.84\%&98.79\%&98.55\%&94.03\%&70.03\%\\
			G-Caps&97.44\%&96.31\%&96.01\%&93.18\%&\textbf{81.70\%}\\
			DeepCaps&99.54\%&99.49\%&99.42\%&97.27\%&73.61\%\\
			SOVNET&\textbf{99.70\%}&\textbf{99.65\%}&\textbf{99.63\%}&\textbf{98.56\%}&\textbf{79.59\%}\\
			\hline
	\end{tabular}}
	\resizebox{0.5\linewidth}{!}{
		\begin{tabular}{|p{1.5cm}||p{1cm}||p{1cm}||p{1cm}||p{1cm}||p{1cm}|}
			\hline
			\multicolumn{6}{|c|}{Results on Training on MNIST Transformed by (2,90\textdegree)}\\
			\hline
			Method&(0,0\textdegree)&(2,30\textdegree)&(2,60\textdegree)&(2,90\textdegree)&(2,180\textdegree)\\
			\hline
			Capsnet&99.17\%&98.77\%&98.73\%&98.29\%&79.18\%\\
			EMcaps&98.83\%&98.38\%&98.42\%&97.86\%&77.47\%\\
			G-Caps&97.67\%&96.53\%&96.33\%&95.52\%&83.76\%\\
			DeepCaps&99.44\%&99.16\%&99.03\%&98.64\%&77.54\%\\
			SOVNET&\textbf{99.68\%}&\textbf{99.60\%}&\textbf{99.59\%}&\textbf{99.5\%}&\textbf{87.76\%}\\
			\hline
	\end{tabular}}
	\resizebox{0.5\linewidth}{!}{
		\begin{tabular}{|p{1.5cm}||p{1cm}||p{1cm}||p{1cm}||p{1cm}||p{1cm}|}
			\hline
			\multicolumn{6}{|c|}{Results on Training on MNIST Transformed by (2,180\textdegree)}\\
			\hline
			Method&(0,0\textdegree)&(2,30\textdegree)&(2,60\textdegree)&(2,90\textdegree)&(2,180\textdegree)\\
			\hline
			Capsnet&97.52\%&96.65\%&96.64\%&96.50\%&96.09\%\\
			EMcaps&95.89\%&95.22\%&95.42\%&95.42\%&95.09\%\\
			G-Caps&95.24\%&93.67\%&93.83\%&93.79\%&93.76\%\\
			DeepCaps&98.17\%&97.84\%&97.89\%&\textbf{98.11\%}&98.01\%\\
			SOVNET&\textbf{98.34\%}&\textbf{98.10\%}&\textbf{98.11\%}&\textbf{98.08\%}&\textbf{98.06\%}\\
			\hline
	\end{tabular}}
	\vspace*{0.25 cm}
\end{table}

\begin{table}[h]
	\caption{Experiments on FashionMNIST}
	\resizebox{0.5\linewidth}{!}{
		\begin{tabular}{|p{1.5cm}||p{1cm}||p{1cm}||p{1cm}||p{1cm}||p{1cm}|}
			\hline
			\multicolumn{6}{|c|}{Results on Training on Untransformed FashionMNIST}\\
			\hline
			Method&(0,0\textdegree)&(2,30\textdegree)&(2,60\textdegree)&(2,90\textdegree)&(2,180\textdegree)\\
			\hline
			Capsnet&91.23\%&57.15\%&37.98\%&28.33\%&22.38\%\\
			EMcaps&90.05\%&59.75\%&40.26\%&30.17\%&23.82\%\\
			G-Caps&86.56\%&50.05\%&35.05\%&29.93\%&27.10\%\\
			DeepCaps&93.27\%&57.85\%&37.06\%&27.63\%&21.86\%\\
			SOVNET&\textbf{94.72\%}&\textbf{61.58\%}&\textbf{41.01\%}&\textbf{34.07\%}&\textbf{27.63\%}\\
			\hline
	\end{tabular}}
	\resizebox{0.5\linewidth}{!}{
		\begin{tabular}{|p{1.5cm}||p{1cm}||p{1cm}||p{1cm}||p{1cm}||p{1cm}|}
			\hline
			\multicolumn{6}{|c|}{Results on Training on FashionMNIST Transformed by (2,30\textdegree)}\\
			\hline
			Method&(0,0\textdegree)&(2,30\textdegree)&(2,60\textdegree)&(2,90\textdegree)&(2,180\textdegree)\\
			\hline
			Capsnet&91.22\%&89.57\%&69.58\%&50.17\%&35.16\%\\
			EMcaps&90.17\%&89.47\%&68.39\%&49.23\%&37.02\%\\
			G-Caps&83.28\%&80.12\%&64.86\%&53.71\%&\textbf{52.54\%}\\
			DeepCaps&93.71\%&93.40\%&75.32\%&53.35\%&36.30\%\\
			SOVNET&\textbf{94.99\%}&\textbf{94.36\%}&\textbf{77.19\%}&\textbf{58.59\%}&\textbf{43.84\%}\\
			\hline
	\end{tabular}}
	\resizebox{0.5\linewidth}{!}{
		\begin{tabular}{|p{1.5cm}||p{1cm}||p{1cm}||p{1cm}||p{1cm}||p{1cm}|}
			\hline
			\multicolumn{6}{|c|}{Results on Training on FashionMNIST Transformed by (2,60\textdegree)}\\
			\hline
			Method&(0,0\textdegree)&(2,30\textdegree)&(2,60\textdegree)&(2,90\textdegree)&(2,180\textdegree)\\
			\hline
			Capsnet&89.98\%&88.55\%&88.15\%&72.81\%&46.89\%\\
			EMcaps&88.24\%&87.30\%&87.04\%&71.72\%&48.14\%\\
			G-Caps&82.04\%&80.12\%&78.94\%&68.05\%&59.25\%\\
			DeepCaps&93.36\%&93.06\%&92.84\%&80.76\%&49.90\%\\
			SOVNET&\textbf{94.49\%}&\textbf{94.08\%}&\textbf{94.20\%}&\textbf{90.23\%}&\textbf{73.48\%}\\
			\hline
	\end{tabular}}
	\resizebox{0.5\linewidth}{!}{
		\begin{tabular}{|p{1.5cm}||p{1cm}||p{1cm}||p{1cm}||p{1cm}||p{1cm}|}
			\hline
			\multicolumn{6}{|c|}{Results on Training on FashionMNIST Transformed by (2,90\textdegree)}\\
			\hline
			Method&(0,0\textdegree)&(2,30\textdegree)&(2,60\textdegree)&(2,90\textdegree)&(2,180\textdegree)\\
			\hline
			Capsnet&88.78\%&87.18\%&87.13\%&86.19\%&59.59\%\\
			EMcaps&86.43\%&85.85\%&85.82\%&85.63\%&61.15\%\\
			G-Caps&80.71\%&79.55\%&79.17\%&79.21\%&72.11\%\\
			DeepCaps&93.07\%&92.93\%&92.75\%&92.51\%&62.50\%\\
			SOVNET&\textbf{94.41\%}&\textbf{94.03\%}&\textbf{93.93\%}&\textbf{93.98\%}&\textbf{91.42\%}\\
			\hline
	\end{tabular}}
	\resizebox{0.5\linewidth}{!}{
		\begin{tabular}{|p{1.5cm}||p{1cm}||p{1cm}||p{1cm}||p{1cm}||p{1cm}|}
			\hline
			\multicolumn{6}{|c|}{Results on Training on FashionMNIST Transformed by (2,180\textdegree)}\\
			\hline
			Method&(0,0\textdegree)&(2,30\textdegree)&(2,60\textdegree)&(2,90\textdegree)&(2,180\textdegree)\\
			\hline
			Capsnet&86.90\%&84.94\%&84.93\%&84.75\%&84.72\%\\
			EMcaps&82.99\%&82.67\%&82.18\%&82.32\%&82.18\%\\
			G-Caps&80.65\%&79.66\%&79.46\%&79.47\%&79.37\%\\
			DeepCaps&92.07\%&91.71\%&91.70\%&91.76\%&91.66\%\\
			SOVNET&\textbf{94.11\%}&\textbf{93.77\%}&\textbf{93.56\%}&\textbf{93.57\%}&\textbf{93.60\%}\\
			\hline
	\end{tabular}}
	\vspace*{0.25 cm}
\end{table}

\begin{table}[h]
	\caption{Experiments on Cifar-10}
	\resizebox{0.5\linewidth}{!}{
		\begin{tabular}{|p{1.5cm}||p{1cm}||p{1cm}||p{1cm}||p{1cm}||p{1cm}|}
			\hline
			\multicolumn{6}{|c|}{Results on Training on Untransformed Cifar-10}\\
			\hline
			Method&(0,0\textdegree)&(2,30\textdegree)&(2,60\textdegree)&(2,90\textdegree)&(2,180\textdegree)\\
			\hline
			Capsnet&68.28\%&55.57\%&43.55\%&37.48\%&30.89\%\\
			EMcaps&62.85\%&49.28\%&41.37\%&34.73\%&29.90\%\\
			G-Caps&49.54\%&38.45\%&31.89\%&30.88\%&27.70\%\\
			DeepCaps&76.76\%&\textbf{67.97\%}&\textbf{53.56\%}&\textbf{45.22\%}&35.67\%\\
			SOVNET&\textbf{88.34\%}&\textbf{47.57\%}&\textbf{42.24\%}&\textbf{43.75\%}&\textbf{43.52\%}\\
			\hline
	\end{tabular}}
	\resizebox{0.5\linewidth}{!}{
		\begin{tabular}{|p{1.5cm}||p{1cm}||p{1cm}||p{1cm}||p{1cm}||p{1cm}|}
			\hline
			\multicolumn{6}{|c|}{Results on Training on Cifar-10 Transformed by (2,30\textdegree)}\\
			\hline
			Method&(0,0\textdegree)&(2,30\textdegree)&(2,60\textdegree)&(2,90\textdegree)&(2,180\textdegree)\\
			\hline
			Capsnet&73.45&69.87\%&61.17\%&52.29\%&42.58\%\\
			EMcaps&70.24\%&66.63\%&59.10\%&50.93\%&42.26\%\\
			G-Caps&49.50\%&48.88\%&45.78\%&42.93\%&38.74\%\\
			DeepCaps&84.24\%&82.54\%&74.63\%&63.54\%&48.63\%\\
			SOVNET&\textbf{86.58\%}&\textbf{85.35\%}&\textbf{82.51\%}&\textbf{79.14\%}&\textbf{69.64\%}\\
			\hline
	\end{tabular}}
	\resizebox{0.5\linewidth}{!}{
		\begin{tabular}{|p{1.5cm}||p{1cm}||p{1cm}||p{1cm}||p{1cm}||p{1cm}|}
			\hline
			\multicolumn{6}{|c|}{Results on Training on Cifar-10 Transformed by (2,60\textdegree)}\\
			\hline
			Method&(0,0\textdegree)&(2,30\textdegree)&(2,60\textdegree)&(2,90\textdegree)&(2,180\textdegree)\\
			\hline
			Capsnet&70.26\%&67.69\%&66.62\%&60.04\%&47.99\%\\
			EMcaps&66.53\%&65.09\%&63.21\%&58.04\%&47.61\%\\
			G-Caps&49.63\%&50.31\%&48.84\%&47.43\%&43.11\%\\
			DeepCaps&\textbf{83.92\%}&83.63\%&\textbf{82.79\%}&78.09\%&60.02\%\\
			SOVNET&\textbf{83.86\%}&\textbf{83.63\%}&\textbf{83.57\%}&\textbf{83.06\%}&\textbf{80.89\%}\\
			\hline
	\end{tabular}}
	\resizebox{0.5\linewidth}{!}{
		\begin{tabular}{|p{1.5cm}||p{1cm}||p{1cm}||p{1cm}||p{1cm}||p{1cm}|}
			\hline
			\multicolumn{6}{|c|}{Results on Training on Cifar-10 Transformed by (2,90\textdegree)}\\
			\hline
			Method&(0,0\textdegree)&(2,30\textdegree)&(2,60\textdegree)&(2,90\textdegree)&(2,180\textdegree)\\
			\hline
			Capsnet&67.81\%&65.64\%&65.46\%&64.35\%&52.79\%\\
			EMcaps&64.33\%&63.00\%&62.70\%&61.42\%&52.08\%\\
			G-Caps&49.98\%&51.24\%&50.63\%&49.95\%&46.59\%\\
			DeepCaps&82.91\%&\textbf{82.78\%}&\textbf{82.66\%}&82.62\%&68.34\%\\
			SOVNET&\textbf{83.33\%}&\textbf{82.76\%}&\textbf{82.58\%}&\textbf{82.79\%}&\textbf{82.22\%}\\
			\hline
	\end{tabular}}
	\resizebox{0.5\linewidth}{!}{
		\begin{tabular}{|p{1.5cm}||p{1cm}||p{1cm}||p{1cm}||p{1cm}||p{1cm}|}
			\hline
			\multicolumn{6}{|c|}{Results on Training on Cifar-10 Transformed by (2,180\textdegree)}\\
			\hline
			Method&(0,0\textdegree)&(2,30\textdegree)&(2,60\textdegree)&(2,90\textdegree)&(2,180\textdegree)\\
			\hline
			Capsnet&61.08\%&59.53\%&60.04\%&59.85\%&59.90\%\\
			EMcaps&57.57\%&55.89\%&56.85\%&56.35\%&55.20\%\\
			G-Caps&39.09\%&41.03\%&41.43\%&41.25\%&41.08\%\\
			DeepCaps&81.12\%&80.81\%&80.64\%&81.05\%&80.92\%\\
			SOVNET&\textbf{82.50\%}&\textbf{81.80\%}&\textbf{81.78\%}&\textbf{81.95\%}&\textbf{81.82\%}\\
			\hline
		\end{tabular}\label{Affine15}}
	\vspace*{0.25 cm} 
\end{table}

\section{Discussion and related work}
A number of insights can be drawn from an observation of the accuracies obtained from the experiments. First, the most obvious, is that SOVNET is significantly more robust to train and test-time geometric transformations of the input. Indeed, SOVNET learns to use even extreme transformations of the training data and generalises better to test-time transformations better in a majority of the cases. However, in certain splits, some baselines perform better than SOVNET. These cases are briefly discussed below.
\paragraph{}
On the CIFAR-10 experiments, DeepCaps performs significantly better than SOVNET on the untransformed case - generalising to test-time transformations better. However, SOVNET learns from train-time transformations better than DeepCaps - outperforming it in a large majority of the other cases. We hypothesize that the first observation is due to the increased (almost double) number of parameters of DeepCaps that allows it to learn features that generalise better to transformations. Further, as p4-convolutions (the prediction-mechanisms used) are equivariant only to rotations in multiples of 90\textdegree, its performance is significantly lower for test-time transformations of 30\textdegree and 60\textdegree for the untransformed case. However, the equivariance of SOVNET allows it to learn better from train-time geometric transforms than DeepCaps, explaining the second observation.

\begin{table}[h]
	\begin{center}
		\caption{Results on augmented SVHN and KMNIST}
		\begin{tabular}{ |p{3cm}||p{3cm}||p{3cm}|  }
			\hline
			Method & SVHN & KMNIST \\
			\hline
			\citep{sabour2017dynamic} & 95.7\%&-\\
			\hline
			\citep{deliege2018hitnet}&94.50\%&-\\
			\hline
			\citep{rajasegaran2019deepcaps}&\textbf{97.16\%}&89.18\%\\
			\hline
			\citep{phaye2018dense}&96.90\%&-\\
			\hline
			\citep{clanuwat2018deep}&-&98.83\%\\
			\hline
			\citep{tissera2019context}&96.8\%&\textbf{99.05\%}\\
			\hline
			SOVNET&\textbf{97.03\%}&\textbf{99.03\%}\\
			\hline
		\end{tabular}\label{augmented}
	\end{center}
\end{table}
 
\paragraph{}
The second case is that GCaps outperforms SOVNET on generalising to extreme transformations on (mainly) MNIST, and once on FashionMNIST, under mild train-time conditions. However, it is unable to sustain this under more extreme train-time perturbations. We infer that this is caused largely by the explicit geometric parameterisation of capsules in G-Caps. While under mild-to-moderate train-time conditions, and on simple datasets, this approach could yield better results, this parameterisation, especially with very simple prediction-mechanisms, can prove detrimental. Thus, the convolutional nature of the prediction-mechanisms, which can capture more complex features, and also the greater depth of SOVNET allows it to learn better from more complex training scenarios. This makes the case for deeper models with more expressive and equivariant prediction-mechanisms.
\paragraph{}
A related point of interest is that G-Caps performs very poorly on the CIFAR-10 dataset - achieving the least accuracy on most cases on this dataset - despite provable guarantees on equivariance. We argue that this is significantly due to the nature of the capsules of this model itself. In GCaps, each capsule is explicitly modelled as an element of a Lie group. Thus, capsules capture exclusively geometric information, and use only this information for routing. In contrast, other capsule models have no such parameterisation. In the case of CIFAR-10, where non-geometric features such as texture are important, we see that purely spatio-geometric based routing is not effective. This observation allows us to make a more general hypothesis that could deal with the fundamentals of capsule networks. We propose a trade-off in capsule networks, based on the notion of equivariance. To appreciate this, some background is necessary on both equivariance and capsule networks.
\paragraph{}
As the body of literature concerning equivariance is quite vast, we only mention a relevant selection of papers. Equivariance can be seen as a desirable, if not fundamental, inductive bias for neural networks used in computer vision. Indeed, the fact that AlexNet =\citep{krizhevsky2012imagenet} automatically learns representation that are equivariant to flips, rotation and scaling =\citep{lenc2015understanding} shows the importance of equivariance  as well as its natural necessity. Thus, a neural network model that can formally guarantee this property is essential. An early work in this regard is the work on group-equivariant convolutions proposed in =\citep{cohen2016group}. There, the authors proposed a generalisation of the 2-D spatial convolution operation to act on a general group of symmetry transforms - increasing the parameter-sharing and, thereby, improving performance. Since then, several other models exhibiting equivariance to certain groups of transformations have been proposed, for example =\citep{cohen2018spherical}, where a spherical correlation operator that exhibits rotation-equivariance was introduced; =\citep{DBLP:journals/corr/abs-1709-01889}, where a network equivariant to rotation and scale, but invariant to translations was presented, and \citep{worrall2018cubenet}, where a model equivariant to translations and 3D right-angled rotations was developed.
\paragraph{}
A fundamental issue with a general group-equivariant convolutional model is the fact that the grid the convolution works with increases exponentially with the type of the transformations considered. This was pointed out in =\citep{sabour2017dynamic}; capsules were proposed as an efficient alternative. In a general capsule network model, each capsule is supposed to represent the pose-coordinates of an object-component. Thus, to increase the scope of equivariance, only a linear increase in the dimension of each capsule is necessary. This was however not formalised in most capsule architectures, which focused on other aspects such as routing =\citep{hinton2018matrix}, =\citep{bahadori2018spectral}, =\citep{wang2018optimization}; general architecture =\citep{rajasegaran2019deepcaps}, =\citep{deliege2018hitnet}, =\citep{rawlinson2018sparse}, \citep{jeong2019ladder}, =\citep{phaye2018dense}, \citep{rosario2019multi}; or application \citep{afshar2018brain}.
\paragraph{}
It was only in group-equivariant capsules =\citep{lenssen2018group} that this idea of efficient equivariance was formalised. Indeed, in that paper equivariance changed from the preserving the action of a group on a vector space to preserving the group-transformation on an element. While such models scale well to larger transformation groups in the sense of preserving equivariance guarantees, we argue that they cannot efficiently handle compositionalities that involve more than spatial geometry. The direct use of capsules as geometric pose-coordinates could lead to exponential representational inefficiencies in the number of capsules. This is the tradeoff we referred to. We do not attempt a formalisation of this, and instead make the observation given next. While SOVNET (using GCNNs) lacks in transformational efficiency, the use of convolutions allows it better capture non-geometric structures well. Further, SOVNET still retains the advantage of learning compositional structures better than CNN models due to the use of routing, placing it in a favourable position between two extremes. 

\section{Conclusion}
We presented a general model for capsule networks that is scalable to deep architectures. We introduced a prediction mechanism based on group-equivariant convolutions, and a routing procedure based on degree-centrality. We showed that this model is equivariant, depending on the nature of the equivariant convolutions. We proved that this results in a preservation of detected compositionalities under transformations. We presented the results of experiments on affine variations of various classification datasets, and showed that our model performs better than several capsule network baselines. A second set of experiments showed that our model performs comparably to convolutional baselines on two other datasets. We also discussed a possible tradeoff between efficiency in the transformational sense and efficiency in the representation of non-geometric compositional relations. As future work, we aim at understanding the role of the routing algorithm in the optimality of the capsule-decomposition graph, and various other properties of interest based on it. We also note that SOVNET allows other equivariant prediction mechanisms - each of which could result in a wider application of SOVNET to different domains.

\bibliography{references}
\bibliographystyle{plain}
\appendix
\section{Appendix}
We present proofs for the theorems mentioned in the main body.

\begin{theorem}
The SOVNET layer defined in Algorithm \ref{degreerouting}, and denoted by the operator $\otimes$ as given above, satisfies $([L_{g}F^{l}] \otimes \Psi_{j}^{l+1})$ = $(L_{g}[F^{l} \otimes \Psi_{j}^{l+1}])$, where $g$ belongs to the underlying group of the equivariant convolution.	
\end{theorem}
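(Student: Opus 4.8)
The plan is to propagate the group action $L_g$ through each of the four steps of Algorithm \ref{degreerouting} in order, showing at each stage that applying $L_g$ to the inputs is the same as applying $L_g$ to the outputs. The backbone of the argument is the standard equivariance of the group-equivariant correlation operator $\star$ from \citep{cohen2016group}, namely $[L_h f] \star \Psi = L_h[f \star \Psi]$; I would first recall (or re-derive in one line, via the substitution $h' = h^{-1}\circ h''$ in the sum $\sum_{h \in G}$) that each predicted component satisfies $S_{ijp}^{l+1}$ computed from $L_h f_i^l$ equals $L_h$ applied to $S_{ijp}^{l+1}$ computed from $f_i^l$, i.e. the prediction tensors transform as $S_{ij}^{l+1}(g) \mapsto S_{ij}^{l+1}(h^{-1}\circ g)$.

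The second step is to check that the $DegreeScore$ procedure commutes with this relabelling. Since the cosine-similarity entries $A_{ik}^j(g)$ are built pointwise (at each fixed $g$) from the vectors $S_{ij}^{l+1}(g)$, replacing each $S_{ij}^{l+1}(g)$ by $S_{ij}^{l+1}(h^{-1}\circ g)$ simply replaces $A_{ik}^j(g)$ by $A_{ik}^j(h^{-1}\circ g)$; the row-sums $Degree_i^j$ and the softmax over $i$ are likewise pointwise in $g$, so $c_{ij}^{l+1}(g) \mapsto c_{ij}^{l+1}(h^{-1}\circ g)$. The key observation to emphasize here is that all of these operations act only on the $i$-index (the index being aggregated) and the vector components, never on the group argument $g$ — so they cannot interfere with the $L_h$-relabelling of $g$. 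Then the weighted sum $f_j^{l+1}(g) = \sum_i c_{ij}^{l+1}(g) S_{ij}^{l+1}(g)$ inherits the same property, $f_j^{l+1}(g)\mapsto f_j^{l+1}(h^{-1}\circ g)$, which is exactly $L_h$ applied to $f_j^{l+1}$; and finally $Squash$ is an entrywise (in $g$) rescaling, so it too commutes with the relabelling. Chaining these gives $([L_h F^l]\otimes \Psi_j^{l+1}) = L_h[F^l \otimes \Psi_j^{l+1}]$.

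The main obstacle, and the step that deserves the most care, is the very first one: verifying that the group-convolution step genuinely produces a function on $G$ that transforms by $L_h$, including correctly handling the $Pool$/stride structure implicit in convolutional capsule layers and the fact that the filters $\Psi_j^{l+1}$ are themselves functions on $G$ (so one is really in the "group-convolution $G\to G$" regime of \citep{cohen2016group}, not the lifting "$\mathbb{Z}^2 \to G$" regime). Once the index substitution $h \mapsto g^{-1}\circ h$ inside $\sum_{h\in G}$ is done carefully and one notes that $\sum_{h\in G}$ is invariant under left-translation of the summation variable, the rest is bookkeeping. I would therefore structure the write-up as: (i) lemma on equivariance of $\star$ with explicit index chase; (ii) three short remarks that $DegreeScore$, the convex combination, and $Squash$ are all "pointwise in $g$" hence commute with $L_h$; (iii) compose.
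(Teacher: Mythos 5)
Your proposal is correct and follows essentially the same route as the paper's proof: establish equivariance of the prediction step via the known equivariance of the group correlation $\star$ from \citep{cohen2016group}, then observe that $DegreeScore$, the weighted sum, and the squash are pointwise in the group argument $g$ and hence commute with the relabelling $g \mapsto h^{-1}\circ g$. Your uniform ``pointwise in $g$'' framing is a slightly cleaner packaging of what the paper argues piecewise (equivariance of dot products, norms, quotients, softmax, and sums/products of equivariant maps), but the substance is identical.
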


\begin{proof}
For the theorem to be true, we must show that each step of Algorithm $\ref{degreerouting}$ is equivariant. We do this step-wise.
\paragraph{}
The predictions $S_{ij}^{l+1}$ made in the first step are group-equivariant. This follows from the fact that $S_{ijp}^{l+1}(g)$ = $(f_{i}^{l} \star \Psi_{j}^{l+1,p})(g)$, and that $([L_{h}f_{i}^{l}]\star\Psi_{j}^{l+1,p})$ = $L_{h}(f_{i}^{l} \star \Psi_{j}^{l+1,p})$ - proved in =\citep{cohen2016group}.
\paragraph{}
We now show that the $DegreeScore$ procedure is equivariant. We see that $Degree_{i}^{j}(g)$ = $\sum_{k=0}^{N_{l}-1}(\frac{S_{ij}^{l+1}(g).S_{kj}^{l+1}(g)}{\Vert S_{ij}^{l+1}(g) \Vert.\Vert S_{kj}^{l+1}(g) \Vert})$; $0 \leq i \leq N_{l}-1$. Each $S_{ij}^{l+1}(g).S_{kj}^{l+1}(g)$ = $\sum_{p=0}^{d^{l+1}-1}(f_{i}^{l} \star \Psi_{j}^{l+1,p})(g)(f_{k}^{l} \star \Psi_{j}^{l+1,p})(g)$. From the equivariance of $\star$, $\sum_{p=0}^{d^{l+1}-1}([L_{h}f_{i}^{l}] \star \Psi_{j}^{l+1,p})(g)([L_{h}f_{k}^{l}] \star \Psi_{j}^{l+1,p})(g)$ 
= $\sum_{p=0}^{d^{l+1}-1}L_{h}(f_{i}^{l} \star \Psi_{j}^{l+1,p})(g)L_{h}(f_{k}^{l} \star \Psi_{j}^{l+1,p})(g)$ = 
$\sum_{p=0}^{d^{l+1}-1}(f_{i}^{l} \star \Psi_{j}^{l+1,p})(h^{-1} \circ g)(f_{k}^{l} \star \Psi_{j}^{l+1,p})(h^{-1} \circ g)$ = 
$[\sum_{p=0}^{d^{l+1}-1}(f_{i}^{l} \star \Psi_{j}^{l+1,p})(f_{k}^{l} \star \Psi_{j}^{l+1,p})](h^{-1} \circ g)$ = $L_{h}[\sum_{p=0}^{d^{l+1}-1}(f_{i}^{l} \star \Psi_{j}^{l+1,p})(f_{k}^{l} \star \Psi_{j}^{l+1,p})](g)$.
\paragraph{} 
Moreover, the two-norm of an equivariant map is also equivariant - from the equivariance of the post-composition of non-linearities over equivariant maps =\citep{cohen2016group}. Also, the division of two (non-zero) equivariant maps is also equivariant. Thus, obtaining the degree-scores is equivariant. Again, the softmax function preserves the equivariance as it is a point-wise non-linearity.
\paragraph{}
The proof is concluded by pointing out that the product and sum of equivariant maps is also equivariant.
\end{proof}

\begin{theorem}
	Consider an $L$-layer SOVNET whose activations are routed according to a procedure belonging to the family given by Algorithm \ref{generalrouting}. Further, assume that this routing procedure is equivariant with respect to the group $G$. Then, given an input $x$ and $\forall g \in G$, $G(x)$ and $G([L_{g}x])$ are isomorphic.
\end{theorem}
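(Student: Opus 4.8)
The plan is to exhibit an explicit isomorphism $\phi : \mathbb{G}(x) \to \mathbb{G}([L_{g}x])$ that relabels grid positions by left-translation by $g$ while leaving all type/pose/activation data and all weighted adjacencies intact. The engine behind everything is a single inductive lemma: \emph{running the network on $[L_{g}x]$ produces, at every layer $l$, capsule functions and routing coefficients that are the $L_{g}$-translates of those produced on $x$}. Writing $f_{i}^{l},a_{i}^{l},c_{ij}^{l+1}$ for the quantities computed from $x$ and $\bar f_{i}^{l},\bar a_{i}^{l},\bar c_{ij}^{l+1}$ for those computed from $[L_{g}x]$, I would prove by induction on $l$ that $\bar f_{i}^{l}=L_{g}f_{i}^{l}$, $\bar a_{i}^{l}=L_{g}a_{i}^{l}$, and $\bar c_{ij}^{l+1}=L_{g}c_{ij}^{l+1}$. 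The base case is the equivariant construction of the primary capsules from $x$, and the inductive step is immediate from the standing hypothesis that the routing procedure of Algorithm~\ref{generalrouting} is $G$-equivariant --- which we read as equivariance of the predictions $S_{ij}^{l+1}$, of the weights returned by \emph{GetWeights}, and hence of $f_{j}^{l+1}$ and $a_{j}^{l+1}$; for degree-routing this is exactly the content of the first theorem.

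Next I would define $\phi$ on vertices by $\phi\bigl(\tilde f_{i}^{l}(h)\bigr) := \tilde f_{i}^{l}(g\circ h)$, where the right-hand side denotes the vertex of $\mathbb{G}([L_{g}x])$ at grid position $g\circ h$, type $i$, layer $l$. The lemma shows this is well defined, since that vertex equals $\bigl(g\circ h,\, i,\, \bar f_{i}^{l}(g\circ h),\, \bar a_{i}^{l}(g\circ h)\bigr) = \bigl(g\circ h,\, i,\, f_{i}^{l}(h),\, a_{i}^{l}(h)\bigr)$: $\phi$ sends a vertex of $\mathbb{G}(x)$ to a genuine vertex of $\mathbb{G}([L_{g}x])$ carrying the same type, layer, pose and activation, with only the positional tag moved by $g$. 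Since $h\mapsto g\circ h$ is a bijection of $G$, $\phi$ is a bijection of vertex sets, with inverse induced by $g^{-1}$.

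Then I would check that $\phi$ preserves edges and their weights. The key point is that the routing pool is covariant under left translation: $Pool(g\circ g_{2}) = g\circ Pool(g_{2})$, because in a group correlation the positions influencing the output at $g_{2}$ form the left-translate $g_{2}\circ\operatorname{supp}(\Psi)$ of the fixed filter support, and left-translation by $g$ commutes with this by associativity. Hence $g_{1}\in Pool(g_{2})$ iff $g\circ g_{1}\in Pool(g\circ g_{2})$, so there is an edge from $\tilde f_{i}^{l}(g_{1})$ to $\tilde f_{j}^{l+1}(g_{2})$ in $E(x)$ exactly when there is one from $\phi(\tilde f_{i}^{l}(g_{1}))$ to $\phi(\tilde f_{j}^{l+1}(g_{2}))$ in $E([L_{g}x])$; and the image edge carries weight $\bar c_{ij}^{l+1}(g\circ g_{2}) = c_{ij}^{l+1}(g^{-1}\circ g\circ g_{2}) = c_{ij}^{l+1}(g_{2})$ by the lemma, matching the original. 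This shows $\phi$ is an isomorphism of weighted directed graphs (respecting the type/layer/pose/activation labels), which is the claim.

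The main obstacle I anticipate is the bookkeeping at two spots rather than any hard computation: (i) pinning down the notion of isomorphism, since the vertex tuples literally contain their grid positions, so one must make explicit that $\phi$ is permitted to relabel positions by $g$ provided it preserves the compositional content and the weighted adjacency --- this is precisely what makes the theorem say the relative geometry is preserved rather than fixed; and (ii) justifying $Pool(g\circ g_{2}) = g\circ Pool(g_{2})$, which is the one place a structural property of the equivariant-convolution architecture is used beyond the abstract equivariance hypothesis, and which, together with the primary-capsule construction, also supplies the base case of the induction.
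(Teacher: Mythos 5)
Your proposal is correct and follows essentially the same route as the paper: both exhibit the explicit left-translation relabelling of grid positions as the graph isomorphism, with vertex bijectivity coming from the bijectivity of translation and edge/weight preservation coming from the assumed layer-wise equivariance of the routing. You are somewhat more explicit than the paper at two points it leaves implicit --- the layer-by-layer induction showing all capsules, activations and routing coefficients are translates, and the covariance $Pool(g\circ g_{2})=g\circ Pool(g_{2})$ of the routing pool --- but these are elaborations of the same argument, not a different one.
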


\begin{proof}
Consider a fixed $L$-layer SOVNET that is equivariant to transformations from a group $G$, and an input $x : G \rightarrow \mathbb{R}^{c}$. Let $\mathbb{G}(x)$ be the capsule-decomposition graph corresponding to $x$. Then $\mathbb{G}(L_{h}x)$ denotes the the capsule-decomposition graph of the transformed input $L_{h}x$.  	
\paragraph{}	
We show that the map $\tilde{f}_{i}^{l}(g)$ $\rightarrow$ $\tilde{f}_{i}^{l}(h^{-1} \circ g)$ is an isomorphism from $\mathbb{G}(x)$ to $\mathbb{G}(L_{h}x)$. First, we note that $\tilde{f}_{i}^{l}(g)$ $\rightarrow$ $\tilde{f}_{i}^{l}(h^{-1} \circ g)$ is a bijection from $V(x)$ to $V(L_{h}x)$. This is from the definition of the vertex set of a capsule-decomposition graph and the fact that the map $g$ $\rightarrow$ $h^{-1} \circ g$ is a bijection.
\paragraph{}
We now show that $(\tilde{f}_{i}^{l}(g_{1}),
\tilde{f}_{j}^{l+1}(g_{2}), c_{ij}(g_{2}))$ $\in E(x)$ if and only if $(\tilde{f}_{i}^{l+1}(h^{-1} \circ  g_{1}), \tilde{f}_{j}^{l+1}(h^{-1} \circ g_{2}),c_{ij}(h^{-1} \circ g_{2}))$ $\in E(L_{h}x)$. 
\paragraph{}
First, let us assume $(\tilde{f}_{i}^{l}(g_{1}),
\tilde{f}_{j}^{l+1}(g_{2}), c_{ij}(g_{2}))$ $\in E(x)$. Thus, $\tilde{f}_{i}^{l}(g_{1})$ is routed to $\tilde{f}_{j}^{l}(g_{2})$ with routing-coefficient $c_{ij}(g_{2})$. However, due to the assumed equivariance of the model, $\tilde{f}_{i}^{l+1}(h^{-1} \circ  g_{1})$ is routed to $\tilde{f}_{j}^{l+1}(h^{-1} \circ g_{2})$ with routing-coefficient $c_{ij}(h^{-1} \circ g_{2})$. This, of course, implies $(\tilde{f}_{i}^{l+1}(h^{-1} \circ  g_{1}), \tilde{f}_{j}^{l+1}(h^{-1} \circ g_{2}),c_{ij}(h^{-1} \circ g_{2}))$ $\in E(L_{h}x)$.
\paragraph{}
The converse of this result is proved in the same way by considering $E(L_{h}x)$, noting that $E(L_{h^{-1}}L_{h}x)$ = $E(x)$, and applying the above result to $E(L_{h}x)$ and $E(L_{h^{-1}}L_{h}x)$.  
\end{proof}

\paragraph{}
In order to compare SOVNET with more sophisticated CNN models, we performed a limited set of experiments on MNIST and FashionMNIST. We trained ResNet18 and ResNet34 on the train split of these transformed by random translations of up to $\pm$ 2 pixels, and random rotations of up to $\pm$ 180\textdegree. The models were tested on various transformed versions of the test-splits. The results of these experiments are given in Table \ref{cnncomp}. As can be seen in the table, SOVNET compares with the two much deeper CNN models. More testing on more complex datasets, as well as deeper SOVNET models must be done, however, to obtain a better understanding of the relative performance of these two kinds of models. We also performed this experiment on FashionMNIST for a simple 8-layer deep GCNN-based on p4-convolutions.

\begin{table}[h]
\caption{Results on ResNet18 and ResNet34}
\resizebox{0.5\linewidth}{!}{
	\begin{tabular}{|p{1.5cm}||p{1cm}||p{1cm}||p{1cm}||p{1cm}||p{1cm}|}
		\hline
		\multicolumn{6}{|c|}{Results on Training on MNIST Transformed by (2,180\textdegree)}\\
		\hline
		Method&(0,0\textdegree)&(2,30\textdegree)&(2,60\textdegree)&(2,90\textdegree)&(2,180\textdegree)\\
		\hline
		ResNet18&98.60\%&98.30\%&98.21\%&98.15\%&98.02\%\\
		ResNet34&98.53\%&98.26\%&98.21\%&98.12\%&98.01\%\\
		SOVNET&98.34\%&98.10\%&98.11\%&98.08\%&98.06\%\\
		\hline
	\end{tabular}}
\resizebox{0.45\linewidth}{!}{
	\begin{tabular}{|p{1.5cm}||p{1cm}||p{1cm}||p{1cm}||p{1cm}||p{1cm}|}
		\hline
		\multicolumn{6}{|c|}{Results on Training on FashionMNIST Transformed by (2,180\textdegree)}\\
		\hline
		Method&(0,0\textdegree)&(2,30\textdegree)&(2,60\textdegree)&(2,90\textdegree)&(2,180\textdegree)\\
		\hline
		ResNet18&94.21\%&93.55\%&93.24\%&93.30\%&93.45\%\\
		ResNet34&94.38\%&93.75\%&93.78\%&93.78\%&93.73\%\\
		SimpleP4&80.00\%&79.15\%&78.98\%&79.00\%&78.97\%\\
		SOVNET&94.11\%&93.77\%&93.56\%&93.57\%&93.60\%\\
		\hline
\end{tabular}}\label{cnncomp}
\vspace*{0.25 cm}
\end{table}
\end{document}